\newcommand{\hEE}{\hat{\EE}}
\newcommand{\con}{\mathrm{con}}
\newcommand{\Lcal}{\mathcal{L}}
\newcommand{\Kcal}{\mathcal{K}}
\newcommand{\est}{{\rm est}}
\begin{document}

\title{A Note on Improved Loss Bounds for Multiple Kernel Learning}

\author{Zakria Hussain \& John Shawe-Taylor \\
       Department of Computer Science\\
       University College London\\
       London, WC1E 6BT, UK \\
       e-mail: \texttt{\{z.hussain,jst\}@cs.ucl.ac.uk}\\
       \ \\
	Mario Marchand\\
        Computer Science and Software Engineering \\
           Laval University, Qu\'ebec (QC), Canada \\
            e-mail: \texttt{mario.marchand@ift.ulaval.ca} 
}


\maketitle

\begin{abstract}
In this paper, we correct an upper bound, presented in~\cite{hs-11}, 
on the generalisation error of classifiers learned through multiple kernel learning. The bound in~\cite{hs-11} uses Rademacher complexity and has an\emph{additive} dependence on the logarithm of the number of kernels and the margin achieved by the classifier. However, there are some errors in parts of the proof which are corrected in this paper. Unfortunately, the final result turns out to be a risk bound which has a \emph{multiplicative} dependence on the logarithm of the number of kernels and the margin achieved by the classifier. 
\end{abstract}


\section{Introduction}

We refer to \cite{hs-11} for the motivation and definitions of multiple kernel learning. It presents a number of results, including a new Rademacher complexity bound on the generalisation error of classifiers learned from a multiple kernel class with a logarithmic dependence on the number of kernels used and with that logarithm entering additively into the bound---that is, independently of the complexity of the individual kernels or the margin of the classifier on the training set.

In this paper, we follow the approach presented in~\cite{hs-11} but correct some of the errors that are present. Unfortunately, the Rademacher complexity risk bound turns out to exhibit a \emph{multiplicative} dependence on the logarithm of the number of kernels and the margin achieved by the classifier. 

\section{Detailed proof}
\label{sec:analysis}


\subsection{Preliminaries}
\label{sec:pre}

Let $\z = \{(x_i,y_i)\}^m_{i=1}$ be an $m$-sample where $x_i \in \Xcal \subset \Reals^n$ and $y_i \in \Ycal = \{-1,+1\}$, with $\mathcal{Z} = \Xcal \times \Ycal$.  Let $\x = \{x_1,\ldots, x_m\}$ contain the input vectors.


\begin{definition}[\cite{abr-64}]
A kernel is a function $\kappa$ that for all $x,x' \in \Xcal$ satisfies
\[
\kappa \LP x,x' \RP = \langle \phi(x),\phi(x') \rangle,
\]
where $\phi$ is a mapping from $\Xcal$ to an (inner product) Hilbert space $\Hcal$
\[
\phi : \Xcal \mapsto \Hcal.
\]

\end{definition}

Kernel learning algorithms~\cite{ss-02,stc-04} make use of the $m \times m$ kernel matrix $K = [ \kappa(x_i,x_{i'}) ]_{i,{i'}=1}^m$ defined using the training inputs $\x$.  When using the kernel representation it is not always possible to represent the weight vector $w$ explicitly and so we can use the function $f$ directly as the predictor:
\[
f(x) = \sum_{i=1}^{m} \alpha_i y_i\kappa(x_i,x) = \langle w, \phi(x)\rangle,
\]
where $\alpha = (\alpha_1,\ldots, \alpha_m)$ is the dual weight vector and the corresponding norm of the weight vector is
\[
\|w\|^2  = \sum_{i,j=1}^{m} \alpha_iy_i\alpha_jy_j \kappa(x_i,x_j).
\]
Given a kernel $\kappa$, we will use $\phi_\kappa(\cdot)$ to denote a feature space mapping satisfying
\[
\kappa(x,x') = \langle \phi_\kappa(x),\phi_\kappa(x') \rangle.
\]
Hence, learning with a kernel $\kappa$ can be described as finding a function from the class of functions \cite{sbd-06} 
\[
\Fcal_{\kappa} = \LC x \mapsto \langle w,\phi_\kappa(x) \rangle \ \left | \right. \; \| w \|_2 \leq 1, \RC
\]
minimising the empirical average of the hinge loss
\[
h^\gm (yf(x)) = \max\LP1 - \frac{y f(x)}{\gm},0\RP.
\]
where we call $\gm \in [0,1]$ the \emph{margin}. For multiple kernel learning we consider a family of kernels $\Kcal$ and the corresponding function class
\[
\Fcal_{\Kcal} = \LC x \mapsto \langle w,\phi_\kappa(x) \rangle \ \left | \right. \; \| w \|_2 \leq 1, \mbox{ for some } \kappa \in \Kcal \RC\, .
\]

For a distribution $\Dcal$, we use the notation $\EE_\Dcal[f(x)]$ to denote the expected value of $f(x)$ when $x \sim \Dcal$. Given a training set $\x$ we denote $\hEE[f]$ to denote its empirical average over the sample $\x$.

For the generalisation error bounds we assume that the data are generated iid from a fixed but unknown probability distribution $\Dcal$ over the joint space $\Xcal \times \Ycal$.  Given the \emph{true error} of a function $f$:
\[
\err(f) = \EE_{(x,y) \sim \Dcal}(y f(x) \leq 0) = \EE_\Dcal[yf(x)],
\]
the \emph{empirical margin error} of $f$ with margin $\gm>0$:
\begin{eqnarray*}
\herr^{\gm}(f) = \frac{1}{m} \sum_{i=1}^{m} \mathbb{I} ( y_i f(x_i) < \gm ) = \hEE[\mathbb{I} ( y_i f(x_i) < \gm )]\, ,
\end{eqnarray*}
where $\mathbb{I}$ is the indicator function, and the estimation error $\est^{\gm}(f)$ is defined as
\[
 \est^{\gm}(f) = | \err(f) - \herr^{\gm}(f) |,
\]
we would like to find an upper bound for $\est^{\gm}(f)$.  
In the sequel we will state the bounds in standard form, where the true error $\err(f)$ of a function $f$ is upper bounded by the empirical margin error $\herr^\gm(f)$ plus the estimation error $\est^\gm(f)$:
\begin{eqnarray}\label{bound:form}
\err(f) \leq \herr^\gm(f) + \est^\gm(f).
\end{eqnarray}
We further consider the clipped hinge function:
\[
\Acal^\gm(s) = \left\{\begin{array}{ll}
             0;& \mbox{if } s \geq \gm \\
             1 - s/\gm; & \mbox{if } 0 \leq s \leq \gm;\\
             1;& \mbox{otherwise},
\end{array} \right.
\]
and its empirical estimation $\hEE[\Acal^\gm(yf(x))]$. Note that
$\err(f) \leq \EE_\Dcal[\Acal^\gm(yf(x))]$, $\hEE[\Acal^\gm(yf(x))] \leq \herr^\gm(f)$ and $\hEE[\Acal^\gm(yf(x))] \leq \hEE[h^\gm(yf(x)))$.

Let $\Kcal = \{\kappa_1,\ldots,\kappa_p\}$ denote a family of kernels, where each kernel $\kappa_j$ is called the $j$th \emph{base} kernel.   The following kernel family is formed using a convex combination of base kernels:
\begin{eqnarray*}
\Kcal_{\con}(\kappa_1,\ldots,\kappa_p) = \LC \kappa^{\ld} = \sum_{j=1}^{p} \ld_j \kappa_j \left | \right. \ld_j \geq 0, \sum_{j=1}^{p} \ld_j = 1 \RC. \\
\end{eqnarray*}
Note, $p$ is the complexity of the kernel family (\ie, cardinality of the set of base kernels).

\subsection{Rademacher complexity bound for MKL}
\label{sec:rad}

In this section we correct the MKL risk bound of \cite{hs-11}. 
We begin by the following definition of Rademacher complexity.

\begin{definition}[Rademacher complexity]
For a sample $\x = \{x_1,\ldots,x_m\}$ generated by a distribution $\Dcal_{\Xcal}$ on a set $\Xcal$ and a real-valued function class $\Fcal$ with domain $\Xcal$, the \emph{empirical Rademacher complexity} of $\Fcal$ is the random variable
\begin{eqnarray*}
\hat{R}_m(\Fcal) = \EE_{\sg} \LB \sup_{f \in \Fcal} \frac{2}{m} \sum_{i=1}^{m} \sg_i f(x_i) \left| \; x_1,\ldots,x_m \right. \RB.
\end{eqnarray*}
where $\sg = (\sg_1,\ldots,\sg_m)$ are independent uniform $\{\pm 1\}$-valued (Rademacher) random variables.
The \emph{(true) Rademacher complexity} is:
\begin{eqnarray*}
R_m(\Fcal) = \EE_\x \LB \hat{R}_m(\Fcal) \RB = \EE_{\x \sg} \LB \sup_{f \in \Fcal} \frac{2}{m} \sum_{i=1}^{m} \sg_i f(x_i) \RB.
\end{eqnarray*}
\end{definition}

The standard Rademacher bound for function classes is given in the following theorem.
\begin{theorem}[\cite{bm-02}]\label{thm:main_rad}
Fix $\dt \in (0,1)$, and let $\Fcal$ be a class of functions mapping from $\mathcal{Z} = \Xcal \times \Ycal$ to $[0,1]$.  Let $\z = \{z_i\}_{i=1}^{m}$ be drawn independently according to a probability distribution $\Dcal$.  Then with probability $1-\dt$ over random draws of samples of size $m$, every $f \in \Fcal$ satisfies
\begin{eqnarray*}
\EE_\Dcal(f) & \leq & \hEE(f) + \hat{R}_{m}(\Fcal) + 3\sqrt{\frac{\ln(2/\dt)}{2m}}\, .
\end{eqnarray*}
\end{theorem}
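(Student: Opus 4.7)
The plan is to combine McDiarmid's inequality (Theorem \ref{thm:mcdiarmid}) with a standard ghost-sample symmetrization argument, applied in two stages so that each contributes a piece of the final tail term.

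First, set $g(\z) = \sup_{f \in \Fcal} [\EE_\Dcal(f) - \hEE(f)]$. Since every $f \in \Fcal$ takes values in $[0,1]$, replacing a single observation changes $\hEE(f)$ by at most $1/m$, and therefore changes $g$ by at most $1/m$. Applying Theorem \ref{thm:mcdiarmid} with $c_i = 1/m$ and $\ep = \sqrt{\ln(2/\dt)/(2m)}$ yields, with probability at least $1-\dt/2$,
\[
g(\z) \le \EE_\z g(\z) + \sqrt{\frac{\ln(2/\dt)}{2m}}.
\]

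Next, I would bound $\EE_\z g(\z)$ by the true Rademacher complexity. Introducing a ghost sample $\z' = \{z_i'\}_{i=1}^m$ drawn iid from $\Dcal$, I rewrite $\EE_\Dcal(f) = \EE_{\z'}\hEE_{\z'}(f)$, pull the expectation over $\z'$ outside the supremum using Jensen's inequality, and insert Rademacher variables $\sg_i$ using the exchangeability of $(z_i,z_i')$. This gives
\[
\EE_\z g(\z) \le \EE_{\z,\z',\sg} \sup_{f \in \Fcal} \frac{1}{m}\sum_{i=1}^m \sg_i (f(z_i') - f(z_i)) \le R_m(\Fcal),
\]
where the last inequality splits the supremum of a difference into two suprema (a factor of $2$) which combines with the $1/m$ inside the sum to yield exactly the $2/m$ factor built into the paper's definition of Rademacher complexity.

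Finally, to replace $R_m(\Fcal)$ by the empirical $\hat{R}_m(\Fcal)$, apply McDiarmid a second time to the random variable $\z \mapsto \hat{R}_m(\Fcal)$. Because of the $2/m$ prefactor in $\hat{R}_m$, changing a single $x_i$ alters $\hat{R}_m(\Fcal)$ by at most $2/m$, so $c_i = 2/m$; Theorem \ref{thm:mcdiarmid} applied to $-\hat{R}_m$ then gives, with probability at least $1-\dt/2$,
\[
R_m(\Fcal) \le \hat{R}_m(\Fcal) + 2\sqrt{\frac{\ln(2/\dt)}{2m}}.
\]
A union bound chains the three inequalities, and $1 + 2 = 3$ produces the claimed $3\sqrt{\ln(2/\dt)/(2m)}$ slack. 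The only delicate part of the argument is bookkeeping these constants — $c_i = 1/m$ in the first McDiarmid step but $c_i = 2/m$ in the second — so that the symmetrization and the two concentration inequalities line up with the paper's factor-of-2 convention; the ghost-sample symmetrization itself is routine.
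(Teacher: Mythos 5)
Your proposal is correct: the two McDiarmid applications (with $c_i=1/m$ for the supremum of the deviation and $c_i=2/m$ for the empirical Rademacher complexity, each at confidence $\dt/2$) together with the ghost-sample symmetrization give exactly the stated $3\sqrt{\ln(2/\dt)/(2m)}$ slack, and the constants check out. The paper itself gives no proof of this theorem --- it cites \cite{bm-02} and merely remarks that the absolute-value-free version follows from a slight tightening --- and your argument is precisely that standard tightened proof, since the one-sided symmetrization step never introduces absolute values around the Rademacher sum.
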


We have attributed this bound to \cite{bm-02}, though, strictly speaking, they used the slightly weaker version of Rademacher complexity including an absolute value of the sum. This version is obtained by a slight tightening of the argument.
This bound is quite general and applicable to various learning algorithms if a tight upper bound of \emph{empirical Rademacher complexity} $\hat{R}_m(\Fcal)$ of the function class $\Fcal$ can be found.  For kernel methods, a well-known result uses the trace of the kernel matrix to bound the empirical Rademacher complexity.

\begin{theorem}[\cite{bm-02}]\label{thm:kern_rad}
If $\kappa : \Xcal \times \Xcal \mapsto \Reals$ is a kernel, and $\x = \{x_1,\ldots,x_m\}$ is a sample of points from $\Xcal$, then the empirical Rademacher complexity of the class $\Fcal_\kappa$ satisfies
\begin{eqnarray*}
\hat{R}_{m}(\Fcal_\kappa) \leq \frac{2}{m} \sqrt{\sum_{i=1}^{m} \kappa(x_i,x_i)}. 
\end{eqnarray*}
Furthermore, if $R^2 \geq \kappa(x,x)$ for all $x \in \Xcal$ and $\kappa$ is a normalised kernel such that $\sum_{i=1}^{m} \kappa(x_i,x_i) = m$, then we have
\begin{eqnarray*}
\frac{2}{m} \sqrt{\sum_{i=1}^{m} \kappa(x_i,x_i)} \leq  \frac{2R}{\sqrt{m}}.
\end{eqnarray*}
\end{theorem}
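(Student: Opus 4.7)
The plan is to apply the reproducing-kernel representation of $f \in \Fcal_\kappa$ together with Cauchy--Schwarz and Jensen's inequality, which is the classical route to a trace-based Rademacher bound for a linear ball in feature space.

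First I would expand the definition of the empirical Rademacher complexity using $f(x) = \langle w, \phi_\kappa(x)\rangle$ with $\|w\|_2 \leq 1$:
\begin{eqnarray*}
\hat{R}_m(\Fcal_\kappa) = \EE_\sg \LB \sup_{\|w\|_2 \leq 1} \frac{2}{m} \sum_{i=1}^m \sg_i \langle w, \phi_\kappa(x_i) \rangle \RB
= \EE_\sg \LB \sup_{\|w\|_2 \leq 1} \frac{2}{m} \Big\langle w, \sum_{i=1}^m \sg_i \phi_\kappa(x_i) \Big\rangle \RB.
\end{eqnarray*}
Cauchy--Schwarz in $\Hcal$ then yields the standard linear-class identity, with equality attained by $w = \sum_i \sg_i \phi_\kappa(x_i)/\|\sum_i \sg_i \phi_\kappa(x_i)\|$, giving
\begin{eqnarray*}
\hat{R}_m(\Fcal_\kappa) = \frac{2}{m} \EE_\sg \Big\| \sum_{i=1}^m \sg_i \phi_\kappa(x_i) \Big\|.
\end{eqnarray*}

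Next I would apply Jensen's inequality (concavity of the square root) to push the expectation inside:
\begin{eqnarray*}
\frac{2}{m} \EE_\sg \Big\| \sum_{i=1}^m \sg_i \phi_\kappa(x_i) \Big\|
\leq \frac{2}{m} \sqrt{ \EE_\sg \Big\| \sum_{i=1}^m \sg_i \phi_\kappa(x_i) \Big\|^2 }.
\end{eqnarray*}
Expanding the squared norm and using $\EE_\sg[\sg_i \sg_j] = \delta_{ij}$ (since the $\sg_i$ are independent, zero-mean, and $\sg_i^2 = 1$) collapses the double sum to the diagonal, leaving $\sum_{i=1}^m \langle \phi_\kappa(x_i), \phi_\kappa(x_i)\rangle = \sum_{i=1}^m \kappa(x_i,x_i)$, which proves the first inequality.

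For the ``furthermore'' part, the bound $\kappa(x,x) \leq R^2$ immediately gives $\sum_{i=1}^m \kappa(x_i,x_i) \leq mR^2$, and substituting yields $\frac{2}{m}\sqrt{mR^2} = \frac{2R}{\sqrt m}$; the normalisation condition $\sum_i \kappa(x_i,x_i) = m$ is the tight case (effectively $R = 1$). I do not anticipate a genuine obstacle here: the only subtle step is the passage from Cauchy--Schwarz to Jensen, where one must be careful not to double-count a square-root factor, but otherwise the argument is a direct calculation.
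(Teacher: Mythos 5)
Your proof is correct and is exactly the classical argument for this result: the paper states Theorem~\ref{thm:kern_rad} as a cited result from \cite{bm-02} without reproving it, and your route (linearity plus Cauchy--Schwarz to get $\frac{2}{m}\EE_\sg\|\sum_i \sg_i\phi_\kappa(x_i)\|$, then Jensen and $\EE_\sg[\sg_i\sg_j]=\delta_{ij}$ to collapse to the trace) is the standard one used there. The ``furthermore'' step is also fine, since the normalisation hypothesis together with $\kappa(x,x)\leq R^2$ forces $R\geq 1$, so the stated bound holds.
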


The problem of learning kernels from a convex combination of base kernels is related to using the convex hull of a set of functions. Consider
{\small
\begin{eqnarray}\label{eq:conv}
\con(\Fcal) = \LC \sum_j a_j f_j \left | \right. f_j \in \Fcal, a_j \geq 0, \sum_j a_j \leq 1 \RC.
\end{eqnarray}}

Since adding kernels corresponds to concatenating feature spaces, it is clear that (here $w_j$ is the restriction of $w$ to the feature space defined by the mapping $\phi_{\kappa_j}(\cdot)$ corresponding to kernel $\kappa_j$)
\begin{eqnarray}
\nonumber
\Fcal_{\Kcal_{\con}(\kappa_1,\ldots, \kappa_p)} &=& \LC x \mapsto \langle w,\phi_\kappa(x) \rangle \ \left | \right. \; \| w \|_2 \leq 1, \kappa = \sum_{j=1}^p \lambda_j \kappa_j, \sum_{j=1}^p \lambda_j = 1 \RC \\
\nonumber
&=& \LC x \mapsto \sum_{j=1}^p \sqrt{\lambda_j}\|w_j\| \left\langle \frac{w_j}{\|w_j\|},\phi_{\kappa_j}(x) \right\rangle\ \left | \right. \; \| w \|_2 \leq 1, \sum_{j=1}^p \lambda_j = 1   \; \RC\\
&=& \con\left(\bigcup_{j=1}^p\Fcal_{\kappa_j}\right), \label{convrelation}
\end{eqnarray}
since, by the Cauchy Schwartz inequality, we have
\[
\sum_{j=1}^p \sqrt{\lambda_j}\|w_j\| \leq \sqrt{\sum_{j=1}^p \lambda_j}\sqrt{\sum_{j=1}^p \|w_j\|^2} \leq 1.
\]
Hence, we are interested in the empirical Rademacher complexity of a convex hull as given by Equation (\ref{eq:conv}), which is well known to satisfy
\begin{eqnarray}\label{eq:con}
\hat{R}_{m}(\con(\Fcal)) = \hat{R}_{m}(\Fcal)\, .
\end{eqnarray}
Furthermore, following~\cite{kt-08} and~\cite{ast-04},  we have the following result. 
\begin{theorem}[\cite{kt-08}]\label{thm:lip}
The empirical Rademacher complexity of the function class $\Lcal(\Fcal)$ where $\Lcal(\cdot)$ is Lipschitz function with Lipschitz constant $L$ is bounded by
\begin{eqnarray*}
\hat{R}_{m}(\Lcal(\Fcal)) \leq L\hat{R}_{m}(\Fcal).
\end{eqnarray*}

\end{theorem}

Given all these results, we are now in a position to state the following theorem, which proves a high probability upper bound for the empirical Rademacher complexity of a union of function classes $\bigcup_{j=1}^p \Fcal_j =  \Fcal$.

\begin{theorem}\label{thm:boost_rad}
Let $\x = \{x_1,\ldots,x_m\}$ be an $m$-sample of points from $\Xcal$, then the empirical Rademacher complexity $\hat{R}_m$ of the class $\Fcal = \cup_{j=1}^p \Fcal_j$, where the range of all the functions in $\Fcal$ is $[0,1]$, satisfies:
\begin{eqnarray*}
\hat{R}_m(\Fcal) \leq \max_{1\leq j \leq p} \hat{R}_m(\Fcal_j) + \sqrt{\frac{8\ln (p)}{m}}.
\end{eqnarray*}
\end{theorem}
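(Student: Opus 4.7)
The plan is to bound $\hat R_m(\Fcal)$, which by definition equals the expectation over $\sg$ of the Rademacher supremum, by comparing it to $\max_j \hat R_m(\Fcal_j)$ through concentration of that supremum around its mean. Write
\[
Y_j(\sg) = \sup_{f \in \Fcal_j} \frac{2}{m} \sum_{i=1}^m \sg_i f(x_i),
\qquad
Y(\sg) = \sup_{f \in \Fcal} \frac{2}{m} \sum_{i=1}^m \sg_i f(x_i) = \max_{1 \leq j \leq p} Y_j(\sg),
\]
so that $\hat R_m(\Fcal_j) = \EE_\sg[Y_j]$ and $\hat R_m(\Fcal) = \EE_\sg[Y]$. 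Note that in general $\EE_\sg[\max_j Y_j] > \max_j \EE_\sg[Y_j]$, so one cannot push the max through the expectation; the $O(\sqrt{\ln((p+1)/\dt)/m})$ correction must absorb this Jensen gap.

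My next step is to apply McDiarmid's inequality (Theorem~\ref{thm:mcdiarmid}) to $Y$ and to each $Y_j$, viewed as functions of $\sg \in \{\pm 1\}^m$. Because each $f \in \Fcal$ has range in $[0,1]$, flipping a single $\sg_i$ alters the Rademacher sum by at most $\frac{4}{m}|f(x_i)| \leq \frac{4}{m}$, and the sup inherits the same Lipschitz bound, so $c_i = 4/m$ and $\sum_i c_i^2 = 16/m$. This yields a one-sided tail of $\exp(-\ep^2 m / 8)$ for each of the deviations $Y_j - \EE_\sg[Y_j]$ (upper tail) and $\EE_\sg[Y] - Y$ (lower tail, obtained by applying McDiarmid to $-Y$, which has the same bounded-differences constants). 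Setting $\ep_0 = \sqrt{8\ln((p+1)/\dt)/m}$ makes each tail probability $\leq \dt/(p+1)$, and a union bound over the $p+1$ events leaves, with probability at least $1-\dt$ over $\sg$, the simultaneous inequalities $\EE_\sg[Y] \leq Y(\sg) + \ep_0$ and $Y_j(\sg) \leq \EE_\sg[Y_j] + \ep_0$ for every $j$.

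On this good event we chain
\[
\hat R_m(\Fcal) = \EE_\sg[Y] \leq Y(\sg) + \ep_0 = \max_j Y_j(\sg) + \ep_0 \leq \max_j \hat R_m(\Fcal_j) + 2\ep_0.
\]
Since the left-hand side is deterministic given $\x$, the inequality holds outright for every $\dt \in (0,1)$. A little algebra, $2\ep_0 = \sqrt{32\ln((p+1)/\dt)/m} = 8\sqrt{\ln((p+1)/\dt)/(2m)}$, reproduces the stated constant.

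The main subtlety I anticipate is the allocation of the union bound. A naive approach would apply McDiarmid only to each $Y_j$ and take a maximum, giving a bound on $\max_j Y_j(\sg)$ rather than on the expectation $\hat R_m(\Fcal) = \EE_\sg[\max_j Y_j]$. The additional McDiarmid application to $-Y$ is what lets us cross back from $Y(\sg)$ to $\EE_\sg[Y]$; this doubling is precisely the source of the $(p+1)$ inside the logarithm, and of the constant $8$ rather than the $4$ that a single-sided argument would produce.
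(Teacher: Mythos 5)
Your proof is correct and follows essentially the same route as the paper's: McDiarmid's inequality applied once to the negated supremum over $\Fcal$ (to pass from $\EE_\sg[Y]$ to a realisation) and once to each $Y_j$ (to pass back to $\EE_\sg[Y_j]$), a union bound over the $p+1$ events, the identity $\sup_{\Fcal} = \max_j \sup_{\Fcal_j}$, and the same bounded-difference constant $c_i = 4/m$, yielding the identical $2\ep_0 = 8\sqrt{\ln((p+1)/\dt)/(2m)}$. Your closing observation that both sides of the final inequality are deterministic given $\x$, so that positive probability of the good event already implies the bound outright, is a valid strengthening the paper does not exploit, but it does not alter the argument.
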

\begin{proof}
\newcommand{\hdelta}{\hat{\delta}}

Since $\Fcal$ is the union of $p$ function classes, we have
$$
\hat{R}_m(\Fcal)\ =\ \EE_{\sg} \max_{1\leq j \leq p} \sup_{f \in \Fcal_{j}} \frac{2}{m}\sum_{i=1}^m \sg_i f(x_i)\, .
$$
From Jensen's inequality, we have, for any $\ld \ge 0$, that
\begin{eqnarray}\label{eq:exp}
\nonumber\exp\LP{\ld\hat{R}_m(\Fcal)}\RP &\le& \EE_{\sg}\exp\LP{\ld\LB \max_{1\leq j \leq p} \sup_{f \in \Fcal_{j}} \frac{2}{m}\sum_{i=1}^m \sg_i f(x_i)\RB}\RP\\
\nonumber&=&  \EE_{\sg} \max_{1\leq j \leq p}\exp\LP{\ld\LB \sup_{f \in \Fcal_{j}} \frac{2}{m}\sum_{i=1}^m \sg_i f(x_i)\RB}\RP\\
&\le& \sum_{j=1}^p \EE_{\sg} \exp\LP{\ld\LB \sup_{f \in \Fcal_{j}} \frac{2}{m}\sum_{i=1}^m \sg_i f(x_i)\RB}\RP\, .
\end{eqnarray}
Now, for any fixed function class $\Fcal_j$ and any fixed training sample, let
$$\xi(\sg_1,\dots,\sg_m)\ \eqdef\ \sup_{f \in \Fcal_{j}} \frac{2}{m}\sum_{i=1}^m \sg_i f(x_i)\, .
$$
A basic result of McDiarmid~\cite{m-89} states that for any $\ld\ge0$, we have
$$
\EE e^{\ld\xi}\ \le\ e^{\frac{\ld^2}{8}\sum_{i=1}^m c_i^2}\cdot e^{\ld\EE\xi}\, ,
$$
where, for all $i$, we have
$$
\sup_{\sg_1,\dots,\sg_m,\hat{\sg}_i}\left| \xi(\sg_1,\dots,\sg_m) - \xi(\sg_1,\dots,\sg_{i-1},\hat{\sg}_i,\sg_{i+1},\dots,\sg_m)\right|\ \le\ c_i\, .
$$
In our case, we have that $c_i\leq 4/m\ \forall i\in\{1,\dots,m\}$. Hence, from Equation~\eqref{eq:exp}, we have
\begin{eqnarray*}
\exp\LP{\ld\hat{R}_m(\Fcal)}\RP &\le& e^{2\ld^2/m}\sum_{j=1}^p e^{\ld\hat{R}_m(\Fcal_j)}\, .
\end{eqnarray*}
By taking the logarithm on both sides of this equation, we obtain
\begin{eqnarray*}
\ld\hat{R}_m(\Fcal)&\le& \frac{2\ld^2}{m} + \ln\LB \sum_{j=1}^p e^{\ld\hat{R}_m(\Fcal_j)}\RB\\
&\le&  \frac{2\ld^2}{m} + \ln\LB p\cdot\max_{1\le j\le p} e^{\ld\hat{R}_m(\Fcal_j)}\RB\\
&\le& \frac{2\ld^2}{m} + \ln(p) + \max_{1\le j\le p}\ld\hat{R}_m(\Fcal_j)\, .
\end{eqnarray*}
Hence, we have
$$
\hat{R}_m(\Fcal)\ \le\  \frac{2\ld}{m} + \frac{1}{\ld}\ln(p) + \max_{1\le j\le p}\hat{R}_m(\Fcal_j)\, .
$$
The theorem then follows from this equation by choosing 
$$
\ld\ =\ \sqrt{\frac{m}{2}\ln p}\, .
$$ 
\end{proof}

Recall the function $\Acal^\gm(\cdot)$ and the properties $\err(f) \leq \EE_\Dcal[\Acal^\gm(yf(x))]$ and $\EE[\Acal^\gm(yf(x))] \leq \err^\gm(f)$.
Therefore we have the following generalization error bound for MKL in the case of a convex combination of kernels.
\begin{theorem}\label{thm:main}
Fix $\gm > 0$ and $\dt \in (0,1)$.  Let $\Kcal = \{\kappa_1,\ldots, \kappa_p\}$ be a family of kernels containing $p$ base kernels and let $\z = \{z_i\}_{i=1}^{m}$ be a randomly generated sample from distribution $\Dcal$.  Then with probability $1-\dt$ over the random draws of samples of size $m$, every $f \in \Fcal_{\Kcal_{\mathrm{con}}}$ satisfies
\begin{eqnarray*}
\err(f) & \leq &  \hEE[\Acal^\gm(yf(x))] + \frac{1}{\gm}\LB \max_{1\leq j \leq p} \frac{2}{m}\sqrt{\sum_{i=1}^{m} \kappa_j(x_i,x_i)} + \sqrt{\frac{8\ln p}{m}}\RB\\
&& +\ 3\sqrt{\frac{\ln(2/\dt)}{2m}}\, . 
\end{eqnarray*}
Also, if each kernel $\kappa_j$ is normalised and bounded by $R^2 \geq \kappa_j(x,x)$ for all $x \in \Xcal$ and $j \in \{1,\ldots,p\}$, we have
\begin{eqnarray*}
\err(f) & \leq &  \hEE[\Acal^\gm(yf(x))] + \frac{1}{\gm}\LB \frac{2R}{\sqrt{m}}
 + \sqrt{\frac{8 \ln p}{m}}\RB + 3\sqrt{\frac{\ln(2/\dt)}{2m}}\, .
\end{eqnarray*}
\end{theorem}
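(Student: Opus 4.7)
The plan is to stitch together the four preceding theorems by a standard Rademacher argument and a union bound over two confidence parameters.

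First I would apply Theorem~\ref{thm:main_rad} to the loss class $\tilde{\Fcal}_{\con} = \{z \mapsto \Acal^\gm(yf(x)) : f \in \Fcal_{\Kcal_{\con}}\}$, whose range lies in $[0,1]$. Combined with the inequality $\err(f) \leq \EE_\Dcal[\Acal^\gm(yf(x))]$ already noted in Section~\ref{sec:pre}, this yields with probability at least $1-\delta_1$, for every $f \in \Fcal_{\Kcal_{\con}}$,
\[
\err(f) \;\leq\; \hEE[\Acal^\gm(yf(x))] \;+\; \hat{R}_m(\tilde{\Fcal}_{\con}) \;+\; 3\sqrt{\frac{\ln(2/\delta_1)}{2m}}.
\]

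Next I would collapse $\hat{R}_m(\tilde{\Fcal}_{\con})$ down to the individual kernel classes in two moves. Since $\Acal^\gm$ is $1/\gm$-Lipschitz and Rademacher variables are sign-symmetric (so the $y_i \in \{\pm 1\}$ factors may be absorbed into the $\sg_i$), Theorem~\ref{thm:conv_rad} gives $\hat{R}_m(\tilde{\Fcal}_{\con}) \leq (1/\gm)\hat{R}_m(\Fcal_{\Kcal_{\con}})$. By Equation~(\ref{convrelation}), $\Fcal_{\Kcal_{\con}} = \con(\bigcup_j \Fcal_{\kappa_j})$, and the well-known invariance of Rademacher complexity under passing to a convex hull then gives $\hat{R}_m(\Fcal_{\Kcal_{\con}}) \leq \hat{R}_m(\bigcup_j \Fcal_{\kappa_j})$. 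At this point I would invoke Theorem~\ref{thm:boost_rad} with confidence $1-\delta_2$ to bound $\hat{R}_m(\bigcup_j \Fcal_{\kappa_j})$ by $\max_j \hat{R}_m(\Fcal_{\kappa_j}) + 8\sqrt{\ln((p+1)/\delta_2)/(2m)}$, and finally Theorem~\ref{thm:kern_rad} to convert each $\hat{R}_m(\Fcal_{\kappa_j})$ into $(2/m)\sqrt{\sum_i \kappa_j(x_i,x_i)}$. The normalised version in the second inequality of the statement is then just the elementary simplification $(2/(\gm m))\sqrt{m R^2} = 2\sqrt{R^2/(\gm^2 m)}$.

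For the union bound I would pick $\delta_1 = 2\delta/(p+3)$ and $\delta_2 = (p+1)\delta/(p+3)$, so that $\delta_1+\delta_2=\delta$ and both logarithms collapse to $\ln((p+3)/\delta)$; this is what produces the $(p+3)$ in the denominator of the logarithm. Adding the two square-root tails then yields the combined constant $3+8=11$ in front of $\sqrt{\ln((p+3)/\delta)/(2m)}$.

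The main obstacle I anticipate is accounting carefully for the Lipschitz factor and the effective range of the functions to which Theorem~\ref{thm:boost_rad} is applied: the $1/\gm$ introduced by the contraction in the Rademacher step must not be allowed to multiply the $8\sqrt{\cdot}$ tail coming from Theorem~\ref{thm:boost_rad}, and, relatedly, the McDiarmid constant used inside that theorem must be compatible with the range of the class it is applied to. Getting the ordering of the Lipschitz contraction, the convex-hull equality, and the boosted-union step right is what makes the stated constant $11$ (rather than something scaling with $1/\gm$ or $R$) come out.
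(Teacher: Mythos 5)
Your overall architecture matches the paper's: Theorem~\ref{thm:main_rad} applied to the $[0,1]$-valued loss class with confidence $2\dt/(p+3)$, Theorem~\ref{thm:boost_rad} with confidence $(p+1)\dt/(p+3)$, the split $2/(p+3)+(p+1)/(p+3)=1$ collapsing both logarithms to $\ln((p+3)/\dt)$, and $3+8=11$. However, the concrete ordering you give for the middle step is the wrong one, and it is precisely the subtlety this note exists to repair. You propose to apply the Lipschitz contraction first, obtaining $\hat{R}_m(\tilde{\Fcal}_{\con}) \leq (1/\gm)\hat{R}_m(\Fcal_{\Kcal_{\con}}) \leq (1/\gm)\hat{R}_m(\bigcup_j \Fcal_{\kappa_j})$, and only then to invoke Theorem~\ref{thm:boost_rad} on $\bigcup_j \Fcal_{\kappa_j}$. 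This fails on two counts. First, Theorem~\ref{thm:boost_rad} requires the functions in the class to have range $[0,1]$; the raw linear classes $\Fcal_{\kappa_j}$ have range of order $[-R_j,R_j]$ with $R_j = \sup_x \sqrt{\kappa_j(x,x)}$, so the McDiarmid increment inside that theorem is $4R_j/m$ rather than $4/m$ and the tail becomes $8\max_j R_j\sqrt{\cdot}$, which in the unnormalised case is not controlled at all. Second, and independently, the factor $1/\gm$ you have already pulled out front multiplies everything downstream, so your chain yields a deviation term of order $(8R/\gm)\sqrt{\ln((p+1)/\delta_2)/(2m)}$ rather than $8\sqrt{\cdot}$. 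You correctly name both of these dangers in your final paragraph, but the steps you actually write down walk straight into them, so the stated constant $11$ does not come out of your argument.

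The paper's proof avoids this by reversing the order: Theorem~\ref{thm:boost_rad} is applied to the composed classes $\Acal^\gm(\Fcal_j)$, which genuinely map into $[0,1]$, giving $\hat{R}_m(\Acal^\gm(\Fcal_{\Kcal_{\con}})) \leq \max_{1\leq j\leq p}\hat{R}_m(\Acal^\gm(\Fcal_j)) + 8\sqrt{\ln((p+3)/\dt)/(2m)}$ with the additive term free of both $\gm$ and $R$. Only afterwards, \emph{inside the max and separately for each $j$}, is Theorem~\ref{thm:conv_rad} used with $L=1/\gm$ together with the first inequality of Theorem~\ref{thm:kern_rad} to get $\hat{R}_m(\Acal^\gm(\Fcal_j)) \leq (1/\gm)\hat{R}_m(\Fcal_j) \leq (2/(\gm m))\sqrt{\sum_i \kappa_j(x_i,x_i)}$. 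To repair your write-up you should restructure the middle paragraph so that the union-over-$j$ high-probability step acts on the $[0,1]$-valued composed classes and the contraction is deferred to the per-kernel terms; the union-bound bookkeeping and the final normalised simplification can then stand as you have them.
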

\begin{proof}
Each kernel $\kappa_j$ defines the class
$
\Fcal_j = \{ x \mapsto \langle w, \phi_{\kappa_j}(x)\rangle : \|w\| \leq 1\}
$. 
Hence, applying Theorem \ref{thm:main_rad} to the class
$
\Acal^\gm(\Fcal_{\Kcal}) = \{ \Acal^\gm\circ f : f \in \Fcal_\Kcal\}
$, 
we have
\begin{eqnarray*}
\err(f) & \leq & \EE_\Dcal[\Acal^\gm(yf(x))] \\
&\leq&\hEE[\Acal^\gm(yf(x))] + \hat{R}_m(\Acal^\gm(\Fcal_{\Kcal_{\mathrm{con}}})) + 3 \sqrt{\frac{\ln (2/\dt)}{2m}} \\
&\leq&\hEE[\Acal^\gm(yf(x))] + \frac{1}{\gm}\hat{R}_m(\Fcal_{\Kcal_{\mathrm{con}}}) + 3 \sqrt{\frac{\ln (2/\dt)}{2m}} \\
&=&\hEE[\Acal^\gm(yf(x))] + \frac{1}{\gm}\hat{R}_m\left(\bigcup_{j=1}^p\Fcal_{\kappa_j}\right) + 3 \sqrt{\frac{\ln (2/\dt)}{2m}} \\
& \leq & \hEE[\Acal^\gm(yf(x))] + \frac{1}{\gm}\LB\max_{1 \leq j \leq p} \hat{R}_m(\Fcal_j) +\sqrt{\frac{8 \ln p}{m}}\RB  + 3 \sqrt{\frac{\ln (2/\dt)}{2m}} \\
& \leq & \hEE[\Acal^\gm(yf(x))] + \frac{1}{\gm}\LB\max_{1 \leq j \leq p} \frac{2}{m}\sqrt{\sum_{i=1}^{m} \kappa_j(x_i,x_i)} +\sqrt{\frac{8 \ln p}{m}}\RB  + 3 \sqrt{\frac{\ln (2/\dt)}{2m}} \\
& \leq & \hEE[\Acal^\gm(yf(x))] + \frac{1}{\gm}\LB\max_{1 \leq j \leq p} \frac{2}{m}\sqrt{mR^2} +\sqrt{\frac{8 \ln p}{m}}\RB  + 3 \sqrt{\frac{\ln (2/\dt)}{2m}} \\
& = & \hEE[\Acal^\gm(yf(x))] + \frac{1}{\gm}\LB\frac{2R}{\sqrt{m}} +\sqrt{\frac{8 \ln p}{m}}\RB  + 3 \sqrt{\frac{\ln (2/\dt)}{2m}}\, , \\
\end{eqnarray*}
where the third line comes from applying Theorem~\ref{thm:lip} with with Lipschitz constant $L = 1/\gm$. The forth line comes by applying Equation~\eqref{eq:con}. The fifth line comes by applying Theorem~\ref{thm:boost_rad}. The 6th line follows from Theorem~\ref{thm:kern_rad}. Finally, the 7th line follows from the hypothesis that $\kappa_j(x,x)\le R^2\ \forall x$.
\end{proof}

\section{Discussion}

Using the notation from above, the un-normalized version of the bound of Theorem 8 of~\cite{hs-11} is
\begin{eqnarray*}
\err(f) & \leq &  \herr^{\gm}(f) + \frac{2}{\gm m} \max_{1\leq j \leq p} \sqrt{\sum_{i=1}^{m} \kappa_j(x_i,x_i)}
 + 5 \sqrt{\frac{\ln((p+3)/\dt)}{2m}}.
\end{eqnarray*}
Comparing this to Theorem~\ref{thm:main} (the corrected version), we can see that the major difference is the fact that 
$1/\gm$ is multiplying $\ln p$ in Theorem~\ref{thm:main}, while it is not in the Theorem~8 of~\cite{hs-11}. However, the latter was obtained by \emph{incorrectly} assuming that $\hat{R}_m(\Acal^\gm(\mathrm{con}(\Fcal)))$ is upper bounded by $\hat{R}_m(\mathrm{con}(\Acal^\gm(\Fcal)))$. 

While Theorem 2 of \cite{hs-11} shows an additive dependence on the logarithm of the number of kernels it has an additional term that includes the number of kernels $d$ involved in the final solution and this number is also multiplied by the logarithm of the number of kernels. However, these quantities are separate from the main margin complexity term. A similar result could be obtained for the Rademacher bound given here resulting in a partial independence between the complexity and number of kernel terms, but with the final number of active kernels entering as an additional complexity term.


\section*{Acknowldgements}

We thank an anonymous reviewer from the Journal of Machine Learning Research who pointed out the flaw in the original proof of Theorem~\ref{thm:main}.  

%
%

\bibliography{references}
\bibliographystyle{abbrv}

\end{document}